\def\E{{\bf E}}
\def\S{K}
\def\reals{\mathbb{R}}
\def\cL{\mathcal{L}}
\def\hp{\hat{p}}
\newtheorem{theorem}{Theorem}
\newtheorem{lemma}{Lemma}
\icmltitlerunning{Adaptively Learning the Crowd Kernel}
\begin{document}

\twocolumn[
\icmltitle{Adaptively Learning the Crowd Kernel}

\icmlauthor{Omer Tamuz}{omertamuz@weizmann.ac.il}
\icmladdress{Microsoft Research New England and Weizmann Institute of Science}
\icmlauthor{Ce Liu}{celiu@microsoft.com}
\icmladdress{Microsoft Research New England}
\icmlauthor{Serge Belongie}{sjb@cs.ucsd.edu}
\icmladdress{UC San Diego}
\icmlauthor{Ohad Shamir}{ohadsh@microsoft.com}
\icmlauthor{Adam Tauman Kalai}{adum@microsoft.com}
\icmladdress{Microsoft Research New England}

\icmlkeywords{active learning, crowdsourcing, kernels}

\vskip 0.3in
]

\begin{abstract}
  We introduce an algorithm that, given $n$ objects, learns a similarity matrix over all $n^2$
  pairs, from crowdsourced data {\em alone}.  The algorithm samples responses to
  adaptively chosen triplet-based relative-similarity queries.  Each query has the form
  ``is object $a$ more similar to $b$ or to $c$?'' and is chosen to be
  maximally informative given the preceding responses.  The output is
  an embedding of the objects into Euclidean space (like MDS); we refer to this
  as the ``crowd kernel.''  SVMs reveal that the crowd kernel captures prominent and subtle features across a number of domains, such as ``is striped'' among neckties and ``vowel vs.\ consonant'' among letters.

\end{abstract}

\section{Introduction}
Essential to the success of machine learning on a new domain is determining a good ``similarity function'' between objects (or alternatively defining good object ``features''). With such a ``kernel,'' one can perform a number of interesting tasks, e.g. binary classification using Support Vector Machines, clustering, interactive database search, or any of a number of other off-the-shelf kernelized applications.  Since this step of determining a kernel is most often the step that is still not routinized, effective systems for achieving this step are desirable as they hold the potential for completely removing the machine learning researcher from ``the loop.'' Such systems could allow practitioners with no machine learning expertise to employ learning on their domain. In many domains, people have a good sense of what similarity is, and in these cases the similarity function may be determined based upon crowdsourced human responses alone.

The problem of capturing and extrapolating a human notion of
perceptual similarity has received increasing attention in recent
years including areas such as vision \cite{Agarwal07}, audition \cite{McFee09}, information
retrieval \cite{Schultz03} and a variety of others represented in the
UCI Datasets \cite{Xing02,Huang10}.  Concretely, the goal of these
approaches is to estimate a similarity matrix $K$ over all pairs of
$n$ objects given a (potentially exhaustive) subset of human
perceptual measurements on tuples of objects.  In some cases the set
of human measurements represents `side information' to computed
descriptors (MFCC, SIFT, etc.), while in other cases -- the present
work included -- one proceeds exclusively with human reported data.
When $K$ is a positive semidefinite matrix induced purely from distributed human
measurements, we refer to it as the {\em crowd kernel} for the set of
objects.

Given such a Kernel, one can exploit it for a variety of purposes
including exploratory data analysis or embedding visualization (as in
Multidimensional Scaling) and relevance-feedback based interactive
search.  As discussed in the above works and \cite{Kendall90}, using a
{\em triplet based} representation of relative similarity, in which a
subject is asked ``is object $a$ more similar to $b$ or to $c$,'' has
a number of desirable properties over the classical approach employed
in Multi-Dimensional Scaling (MDS), i.e., asking for a numerical estimate of ``how similar is
object $a$ to $b$.''  These advantages include reducing fatigue on
human subjects and alleviating the need to reconcile individuals'
scales of similarity.  The obvious drawback with the triplet
based method, however, is the potential $O(n^3)$ complexity.  It is
therefore expedient to seek methods of obtaining high quality
approximations of $K$ from as small a subset of human measurements as
possible.   Accordingly, the primary contribution of this paper is an
efficient method for estimating $K$ via an information theoretic
adaptive sampling approach.

\begin{figure}
\center{\includegraphics[width=3in]{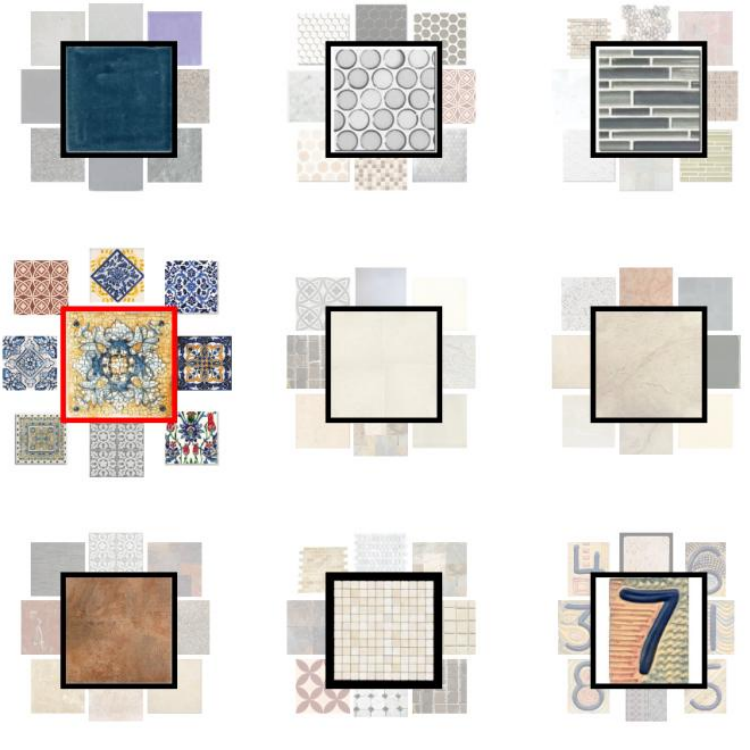}} \caption{\label{fig:tilestree} A sample top-level of a similarity search system that enables a user to search for objects by similarity.  In this case, since the user clicked on the middle-left tile, she will ``zoom-in'' and be presented with similar tiles.}
\end{figure}

At the heart of our approach is a new scale-invariant Kernel approximation model.  The choice of model is shown to be crucial in terms of the adaptive triples that are produced, and the new model produces effective triples to label.  Although this model is nonconvex, we prove that it can be optimized under certain assumptions.

We construct an {\em end-to-end} system for interactive visual search and
browsing using our Kernel acquisition algorithm. The input to this
system is a set of images of objects, such as products available in an
online store. The system automatically crowdsources\footnote{Crowdsourcing was done on Amazon's Mechanical Turk, http://mturk.com.} the kernel
acquisition and then uses this kernel to produce a visual interface
for searching or browsing the set of products. Figure
\ref{fig:tilestree} shows this interface for a dataset of 433 floor
tiles available at amazon.com.

\subsection{Human kernels versus machine kernels}
The bulk of work in Machine Learning focuses on ``Machine Kernels''
that are computed by computer from the raw data (e.g., pixels)
themselves.  Additional work employs human experiments to try to learn
kernels based upon machine features, i.e., to approximate the human
similarity assessments based upon features that can be derived by
machine.  In contrast, when a kernel is learned from human subjects
alone (whether it be data from an individual or a crowd) one requires
no machine features whatsoever. To the computer, the objects
are recognized by ID's only -- the images themselves are hidden from
our system and are only presented to humans.

The primary advantage of machine kernels is that they can {\em generalize} immediately to new data, whereas each additional object needs to be added to our system, for a cost of approximately \$0.15.\footnote{This price was empirically observed to yield ``good performance'' across a number of domains.  See the experimental results section for evaluation criteria.} On the other hand, working with a human kernel has two primary advantages.  First, it does not require any domain expertise.  While for any particular domain, such as music or images of faces, cars, or sofas, decades of research may have provided high-quality features, one does not have to find, implement, and tune these sophisticated feature detectors.

Second, human kernels contain features that are simply not available with state-of-the-art feature detectors, because of knowledge and experience that humans possess.  For example, from images of celebrities, human similarity may be partly based on whether the two celebrities are both from the same profession, such as politicians, actors, and so forth.  Until the longstanding goal of bridging the semantic gap is achieved, humans will be far better than machines at interpreting certain features, such as ``does a couch look comfortable,'' ``can a shoe be worn to an informal occasion,'' or ``is a joke funny.''

We give a simple demonstration of external knowledge through experiments on 26 images of the lower-case Roman alphabet.  Here, the
learned Kernel is shown to capture features such as ``is a letter a vowel versus consonant,'' which uses
external knowledge beyond the pixels.  Note that this experiment is interesting in itself because it is not at first clear if people can
meaningfully answer the question: ``is the letter {\em e} more similar to {\em i} or {\em p}.'' Our experiments show statistically significant consistency with 58\%\footnote{While this fraction of agreement seems small, it corresponds to about 25\% ``noise,'' e.g., if 75\% of people would say that $a$ is more like $b$ then $c$, then two random people would agree with probability $0.75^2=0.56$.}($\pm$2\%, with 95\% confidence) agreement between users on a random
triple of letters.  (For random image triples from an online tie
store, 68\% agreement is observed, and 65\% is observed for floor tile images).

\section{Benefits of adaptation}
We first give high-level intuition for why adaptively choosing triples may
yield better kernel approximations than randomly chosen triples.  Consider $n$ objects organized in a rooted tree with $\ell \ll n$ leaves, inspired by, say, phylogenic trees involving animal species.\footnote{This example is based upon a tree metric rather than a Euclidean one.  However, note that any tree with $\ell$ leaves can be embedded in $\ell$-dimensional Euclidean space, where squared Euclidean distance equals tree distance.}  Say the similarity between objects is decreasing in their distance in the tree graph and, furthermore, that objects are drawn uniformly at random from the classes represented by the leaves of the tree.  Ignoring the details of how one would identify that two objects are in the same leaf or subtree, it is clear that a nonadaptive method would have to ask $\Omega(n \ell)$ questions to determine the leaves to which $n$ objects belong (or at least to determine which objects are in the same leaves), because an expected $\Omega(\ell)$ queries are required per object until just a second object is chosen from the same leaf.  On the other hand, in an ideal setting, an adaptive approach might determine such matters using $O(n \log \ell)$ queries in a balanced binary tree, proceeding from the root down, assuming a constant number of comparisons can determine to which subtree of a node an object belongs, hence an exponential savings.

\section{Related work}

As discussed above, much of the work in machine learning on learning kernels employs `side information' in the form of features about objects.  \citet{Schultz03} highlight the fact that triple-based information may also be gathered by web search click data.
\citet{Agarwal07} is probably the most similar work, in which they learn a kernel matrix from triples of similarity comparisons, as we do.  However, the triples they consider are randomly (nonadaptively) chosen.  Their particular fitting algorithm differs in that it is based on a max-margin approach, which is more common in the kernel learning literature.

There is a wealth of work in {\em active learning} for classification, where a learner selects examples from a pool of unlabeled examples to label.  A number of approaches have been employed, and our work is in the same spirit as those that employ probabilistic models and information-theoretic measures to maximize information.  Other work often labels examples based on those that are closest to the margin or closest to 50\% probability of being positive or negative.  To see why this latter approach may be problematic in our setting, one could imagine a set of triples where we have accurately learned that the answer is $50/50$, e.g., as may be the case if $a$, $b$, and $c$ bear no relation to each other or if they are identical.  One may not want to focus on such triples.

\section{Preliminaries}\label{sec:prelim}

The set of $n$ objects is denoted by $[n]=\{1,2,\ldots,n\}$.  For $a,b,c \in [n]$, a comparison or {\em triple} is of the form, ``is $a$ more similar to $b$ or to $c$.'' We refer to $a$ as the {\em head} of the triple.  We write $p^a_{bc}$ for the probability that a {\em random} crowd member rates $a$ as more similar to $b$, so $p^a_{bc}+p^a_{cb}=1$.
The $n$ objects are assumed to have $d$-dimensional Euclidean representation, and hence the data can be viewed as a matrix $M \in \reals^{n \times d}$, where $M_a$ denotes the row corresponding to $a$, and the {\em similarity matrix} $\S \in \reals^{n \times n}$ is defined by $\S_{ab}=M_a\cdot M_b$, or equivalently $\S = MM^T$.  The goal is to learn $M$ or, equivalently, learn $K$.  (It is easy to go back and forth between positive semidefinite (PSD) $K$ and $M$, though $M$ is only unique up to change of basis.)  Also equivalent is the representation in terms of distances, $d^2(a,b)=\S_{aa}-2\S_{ab}+\S_{bb}$.

In our setting, an {\em MDS algorithm} takes as input $m$ comparisons
$(a_1b_1c_1,y_1) \ldots (a_mb_mc_m,y_m)$ on $n$ items, where $y_i\in
\{0,1\}$ indicates whether $a_i$ is more like $b_i$ than $c_i$.
Unless explicitly stated, we will often omit $y_i$ and assume that the
$b_i$ and $c_i$ have been permuted, if necessary, so that $a_i$ was
rated as more similar to $b_i$ than $c_i$.  The MDS algorithm outputs
an embedding $M \in \reals^{n \times d}$ for some $d \geq 1.$ A
probabilistic MDS model predicts $\hat{p}^{a}_{bc}$ based on $M_a$,
$M_b$, and $M_c$.  The {\em empirical log-loss} of a model that
predicts $\hp^{a_i}_{b_ic_i}$ is $1/m\sum_i \log 1/\hp^{a_i}_{b_ic_i}$.
Our probabilistic MDS model attempts to minimize empirical log loss
subject to some regularization constraint.  We choose a probabilistic
model due to its suitability for use in combination with our
information-gain criteria for selecting adaptive triples, and also due
to the fact that the same triple may elicit different answers from
different people (or the same person on different occasions).

An {\em active} MDS algorithm chooses each triple, $a_ib_ic_i$, adaptively based upon previous labels $(a_1b_1c_1,y_1)$,$\ldots$,$(a_{i-1}b_{i-1}c_{i-1},y_{i-1})$.
We denote by $M^T$ the transpose of matrix $M$.
For compact convex set $W$, let $\Pi_W(\S)=\arg\min_{T \in W} \sum_{ij}(\S_{ij}-T_{ij})^2$ be the closest matrix in $W$ to $\S$.  Also define the set of symmetric  unit-length PSD matrices,
$$B=\{ \S \succeq 0 ~|~ \S_{11}=\S_{22}=\ldots=\S_{nn}=1\}.$$
Projection to the closest element of $B$ is a quadratic program which can be solved via a number of existing techniques \cite{SS05,LRSST10}.

\section{Our algorithm}

Our algorithm proceeds in phases.  In the first phase, it queries a certain number of random triples comparing each object $a \in [n]$ to random pairs of distinct $b,c$.  (Note that we never present a triple where $a=b$ or $a=c$ except for quality control purposes.) Subsequently, it fits the results to a matrix $M \in \reals^{n \times d}$ (equivalently, fits $K\succeq 0$) using the probabilistic {\em relative} similarity model described below.  Then it uses our adaptive selection algorithm to select further random triples.  This iterates: in each phase all previous data is refit to the relative model, and then the adaptive selection algorithm generates more triples.
\begin{itemize}
\item For each item $a\in [n]$, crowdsource labels for $R$ random triples with head $a$.
\item For $t=1,2,\ldots,T:$
\begin{itemize}
\item Fit $\S^t$ to the labeled data gathered thus far, using the method described in Section \ref{sec:rel} (with $d$ dimensions).
\item For each $a\in [n]$, crowdsource a label for the maximally informative triple with head $a$, using the method described in Section \ref{sec:ada}.
\end{itemize}
\end{itemize}
Typical parameter values which worked quickly and well across a number of medium-sized data sets of (hundreds of objects) were $R=10$, $T=25$, and $d=3$.  These settings were also used to generate Figure \ref{fig:YAY}.  We first describe the probabilistic MDS model and then the adaptive selection procedure.  Further details are given in Section \ref{sec:params}.

\subsection{Relative similarity model}\label{sec:rel}

The {\em relative} similarity model is motivated by the scale-invariance observed in many perceptual systems (see, e.g., \citealp{CB99}).  Let $\delta_{ab} = \|M_a-M_b\|^2=\S_{aa}+\S_{bb}-2\S_{ab}$.  A simple scale-invariant proposal takes $\hat{p}^a_{bc} = \frac{\delta_{ac}}{\delta_{ab}+\delta_{ac}}$.  Such a model must also be regularized or else it would have $\Theta(n^2)$ degrees of freedom.  One may regularize by the rank of $\S$ or by setting $\S_{ii}=1$.  Due to the scale-invariance of the model, however, this latter constraint does not have reduced complexity.  In particular, note that halving or doubling the matrix $M$ doesn't change any probabilities.  Hence, descent algorithms may lead to very small, large, or numerically unstable solutions.  To address this, we modify the model as follows, for distinct $a,b,c$:
\begin{equation}
\label{eq:rel}
\hat{p}^a_{bc} =  \frac{\mu+\delta_{ac}}{2\mu+\delta_{ab}+\delta_{ac}} \ \textrm{ and }\  \S_{ii}=1,
\end{equation}
for some parameter $\mu>0$.  Alternatively, this change may be viewed as an additional assumption imposed on the previous model --  we suppose each object possesses a minimal amount of ``uniqueness,'' $\mu>0$, such that $\S = \mu I + T$, where $T \succeq 0$.  We fit the model by local optimization performed directly on $M$ (with random initialization), and high-quality adaptive triples are produced even for low dimensions.\footnote{For high-dimensional problems, we perform a gradient projection descent on $\S$.  In particular,
starting with $\S^0=I$, we compute $\S^{t+1}= \Pi_B(\S^t - \eta \nabla \cL(\S))$ for step-size $\eta$ (see Preliminaries for the definition of $\Pi_B$).}  Here $\mu$ serves a purpose similar to a margin constraint.

There are two interesting points to make about our choice of model.  First, the loss is not convex in $\S$, so there is a concern that local optimization may be susceptible to local minima.  In Section \ref{sec:theory}, we state a theorem which explains why this does not seem to be a significant problem.  Second, in Section \ref{sec:logistic}, we discuss a simple convex alternative based on logistic regression, and we explain why this model, in combination with our adaptive selection criterion, gives rise to poor adaptively-selected triples.

\section{Adaptive selection algorithm}\label{sec:ada}

The idea is to capture the uncertainty about the
location of an object through a probability distribution over points
in $\reals^d$, and then to ask the question that maximizes information
gain. Given a set of previous comparisons of $n$ objects, we generate, for
each object $a=1,2,\ldots,n$, a new triple to compare $a$ to, as
follows.  First, we embed the objects into $\reals^d$ as described
above, using the available comparisons. Initially, we use a seed of
randomly selected triples for this purpose. Later, we use all
available comparisons - the initial random ones and those acquired
adaptively.

Now, say the crowd has previously rated $a$ as more similar to $b_i$
than $c_i$, for $i=1,2,\ldots,j-1$, and we want to generate the $j$th
query, $(a,{b_j,c_j})$ (this is a slight abuse of notation because we
don't know which of $b_j$ or $c_j$ will be rated as closer to
$a$). These observations imply a posterior distribution of $\tau(x)
\propto \pi(x) \prod_i \hp^{x}_{b_ic_i}$ over $x \in \reals^d$, where
$x$ is the embedding of $a$, and $\pi(x)$ is a prior distribution, to
be described shortly.

Given any candidate query for objects in the database $b$ and $c$, the
model predicts that the crowd will rate $a$ as more similar to $b$
than $c$ with probability $p \propto \int_x
\frac{\delta(x,c)}{\delta(x,b)+\delta(x,c)}\tau(x)dx$.  If it rates $a$ more similar to $b$ than $c$ then
$x$ has a posterior distribution of $\tau_b(x) \propto
\tau(x)\frac{\delta(x,c)}{\delta(x,b)+\delta(x,c)}$, and $\tau_c(x)$
(of similar form) otherwise.  The {\em information gain} of this query
is defined to be $H(\tau)-pH(\tau_b)-(1-p)H(\tau_a)$, where $H(\cdot)$
is the entropy of a distribution. This is equal to the mutual
information between the crowd's selection and $x$. The algorithm
greedily selects a query, among all pairs $b,c \neq a$, which
maximizes information gain.  This can be somewhat
computationally intensive (seconds per object in our datasets), so for
efficiency we take the best pair from a sample of random pairs.

It remains to explain how we generate the prior $\pi$.  We take $\pi$
to be the uniform distribution over the set of points in $M$.  Hence,
the process can be viewed as follows.  For the purpose of generating a
new triple, we pretend the coordinates of all other objects are
perfectly known, and we pretend that the object in question, $a$, is
a uniformly random one of these other objects.  The chosen pair is designed to
maximize the information we receive about which object it is, given
the observations we already have about $a$.  The hope is that, for
sufficiently large data sets, such a data-driven prior is a reasonable
approximation to the actual distribution over data.  Another natural
alternative prior would be a multinormal distribution fit to $M$.

\subsection{Optimization guarantee}\label{sec:theory}

The relative similarity model is appealing in that it fits the data well, suggests good triples, and also represents interesting features on the data.  Unfortunately, the model itself is not convex.  We now give some justification for why gradient descent should not get trapped in local minima.  As is sometimes the case in learning, it is easier to analyze an online version of the algorithm, i.e., a stochastic gradient descent.  Here, we suppose that the sequence of triples is presented in order: the learner predicts $\S^{t+1}$ based on $(a_1,b_1,c_1,y_1),\ldots,(a_t,b_t,c_t,y_t)$.  The loss on iteration $t$ is $\ell_t(\S^t)=\log 1/p$ where $p$ is the probability that the relative model with $\S^t$ assigned to the correct outcome.

We state the following theorem about stochastic gradient descent,
where $\S^0 \in B$ is arbitrary and $\S^{t+1}=\Pi_B(\S^t-\eta
\nabla\ell_t(\S^t))$. 

\begin{theorem}
  \label{thm:one}
Let $a_t,b_t,c_t \in [n]$ be arbitrary, for $t=1,2,\ldots$.  Suppose there is a matrix $\S^*\in B$ such that $\Pr[y_t=1]=\frac{\mu+2-2\S^*_{ac}}{2\mu+4-2\S^*_{ab}-2\S^*_{ac}}$.  For any $\epsilon>0$, there exists an $T_0$ such that for any $T>T_0$ and $\eta=1/\sqrt{T}$,
$${\mathrm E}\left[\frac{1}{T}\sum_{t=1}^T \ell_t(\S^t)-\ell_t(\S^*) \right]\leq \epsilon.$$
\end{theorem}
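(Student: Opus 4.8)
The plan is to run the standard online-gradient-descent potential-function analysis and to replace the single place where convexity of the per-round loss is normally invoked by an \emph{in-expectation} inequality that exploits the generative assumption $\Pr[y_t=1]=\hp^{a_t}_{b_tc_t}(\S^*)$. I expect this replacement to be the crux, because the loss $\ell_t$ is genuinely nonconvex in $\S$ (its Hessian is indefinite), so the usual first-order inequality $\ell_t(\S^t)-\ell_t(\S^*)\le\langle\nabla\ell_t(\S^t),\S^t-\S^*\rangle$ is simply false pointwise.

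First I would carry out the telescoping step, which needs no convexity at all. Since $\Pi_B$ is projection onto the convex set $B$ and $\S^*\in B$, projection is nonexpansive, so
\[
\|\S^{t+1}-\S^*\|^2 \le \|\S^t-\eta\nabla\ell_t(\S^t)-\S^*\|^2 = \|\S^t-\S^*\|^2 - 2\eta\langle\nabla\ell_t(\S^t),\S^t-\S^*\rangle + \eta^2\|\nabla\ell_t(\S^t)\|^2 .
\]
Rearranging and summing telescopes the potential to give
\[
\sum_{t=1}^T \langle\nabla\ell_t(\S^t),\S^t-\S^*\rangle \le \frac{\|\S^0-\S^*\|^2}{2\eta} + \frac{\eta}{2}\sum_{t=1}^T\|\nabla\ell_t(\S^t)\|^2 .
\]
I would then bound both right-hand terms by constants in $T$: every matrix in $B$ is PSD with unit diagonal, hence has entries in $[-1,1]$ and Frobenius diameter $O(n)$; and because $\mu>0$ forces $\hp$ away from $0$ and $1$ (indeed $\hp\in[\tfrac{\mu}{2\mu+4},\tfrac{\mu+4}{2\mu+4}]$ on $B$, using $\delta_{ab},\delta_{ac}\in[0,4]$), the log-loss gradient $\nabla\ell_t$ has norm bounded by a constant depending only on $\mu$ and $n$.

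The heart of the argument is to relate the left-hand side back to the regret in expectation. Conditioning on the past so that $\S^t$ is fixed and only $y_t$ is random, I would establish two explicit identities. Writing $q_t=\hp^{a_t}_{b_tc_t}(\S^*)$ and $\hp_t=\hp^{a_t}_{b_tc_t}(\S^t)$, the expected per-round regret is a Bernoulli KL divergence, $\E[\ell_t(\S^t)-\ell_t(\S^*)\mid\mathcal F_{t-1}]=\mathrm{KL}(q_t\,\|\,\hp_t)$, since $\ell_t$ is the cross-entropy loss and $\E[\ell_t(\S^*)\mid\mathcal F_{t-1}]$ is the binary entropy of $q_t$. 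For the gradient term I would exploit that $\hp^a_{bc}(\S)=N(\S)/D(\S)$ with $N=\mu+\delta_{ac}$ and $D=2\mu+\delta_{ab}+\delta_{ac}$ both \emph{affine} in $\S$; substituting $N(\S^t)-N(\S^*)=\langle\nabla N,\S^t-\S^*\rangle$ and likewise for $D$ collapses the directional derivative to $\langle\nabla\hp_t,\S^t-\S^*\rangle=(\hp_t-q_t)\,D(\S^*)/D(\S^t)$, whence $\E[\langle\nabla\ell_t(\S^t),\S^t-\S^*\rangle\mid\mathcal F_{t-1}]=\frac{(\hp_t-q_t)^2}{\hp_t(1-\hp_t)}\cdot\frac{D(\S^*)}{D(\S^t)}$.

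Finally I would combine these with two elementary facts: the Bernoulli bound $\mathrm{KL}(q\,\|\,\hp)\le\chi^2(q\,\|\,\hp)=\frac{(q-\hp)^2}{\hp(1-\hp)}$, and $D(\S^*)/D(\S^t)\ge\tfrac{2\mu}{2\mu+8}=:\rho>0$ (again from $\delta_{ab},\delta_{ac}\in[0,4]$). Together these yield the in-expectation one-point-convexity inequality $\E[\ell_t(\S^t)-\ell_t(\S^*)\mid\mathcal F_{t-1}]\le\rho^{-1}\,\E[\langle\nabla\ell_t(\S^t),\S^t-\S^*\rangle\mid\mathcal F_{t-1}]$, which is precisely the surrogate for convexity that the telescoped bound needs. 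Taking total expectations, chaining with the summed inequality, setting $\eta=1/\sqrt T$, and dividing by $T$ gives $\E[\tfrac1T\sum_t(\ell_t(\S^t)-\ell_t(\S^*))]=O\!\big(1/(\rho\sqrt T)\big)$, which drops below any $\epsilon>0$ once $T>T_0$. The main obstacle is the third and fourth steps: since the pointwise loss is nonconvex, recovering the first-order inequality hinges on the linear-fractional simplification of $\langle\nabla\hp_t,\S^t-\S^*\rangle$ together with $\mu>0$ keeping both $D(\S^*)/D(\S^t)$ and the gradients bounded.
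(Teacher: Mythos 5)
Your proposal is correct and matches the paper's own argument essentially step for step: the paper proves the same bound via an auxiliary ``relative regression'' lemma (vectorizing $K$ with the constant $\mu+2$ absorbed into an extra coordinate so that numerator and denominator become linear in $w$), and its analysis consists of exactly your potential/telescoping step, your in-expectation identity $\E\left[\langle\nabla\ell_t(K^t),K^t-K^*\rangle\mid\mathcal{F}_{t-1}\right]=\frac{(\hp_t-q_t)^2}{\hp_t(1-\hp_t)}\cdot\frac{D(K^*)}{D(K^t)}$ (obtained there from $\nabla_t\cdot w_t=0$ together with an explicit computation of $\E[\nabla_t\cdot w^*]$, with the factor $Z_t$ equal to your $\frac{D(K^*)}{D(K^t)}\cdot\frac{1}{\hp_t(1-\hp_t)}$), your lower bound on the ratio of denominators (the paper's $w\cdot x\in[\alpha,1]$ playing the role of your $\rho=\mu/(\mu+4)$), and your $\mathrm{KL}\leq\chi^2$ inequality, which is precisely the paper's Lemma~\ref{lem:appx1}. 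The only cosmetic difference is that you carry out the computation directly in matrix space using the affinity of $N$ and $D$ in $K$, whereas the paper first reduces to the linear-fractional vector model and then invokes the scaled version of its Lemma~\ref{lem:rel}.
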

We prove this theorem in Appendix~\ref{app:proof}.
\subsection{The logistic model: A convex alternative}\label{sec:logistic}
As a small digression, we explain why the choice of probabilistic model is especially important for adaptive learning.  To this end, consider the following {\em logistic} model.  This model is a natural hybrid of logistic regression and MDS.
\begin{equation}\label{eq:logistic}
\hat{p}^a_{bc} = \frac{e^{\S_{ab}}}{e^{\S_{ab}}+e^{\S_{ac}}} = \frac{1}{1+e^{\S_{ac}-\S_{ab}}}.
\end{equation}
Note that $\log 1+e^{\S_{ac}-\S_{ab}}$ is a convex function of $\S\in \reals^{n\times n}$.  Hence, the problem of minimizing its empirical log loss over a convex set is a convex optimization problem.

Experiments indicate that the logistic model fits data well and reproduces interesting features, such as vowel/consonant or stripedness.  However, empirically it performs poorly in terms of {\em deciding which triples to ask}.  Note that the logistic model (and any generalized linear model) will select extreme comparisons, where the inner products are as large as possible.  To give an intuitive understanding, suppose that hair length was a single feature and one wanted to determine whether a person has hair length $x$ or $x+1$.  The logistic model would compare that person to a bald person ($x=0$) and a person with hair length $2x+1$, while the relative model would ideally compare him to people with hair lengths $x$ and $x+1$.

\section{System parameters \& quality control}\label{sec:params}
Experiments were performed using Amazon's Mechanical Turk web service,
where we defined `Human Intelligence Tasks' to be performed by one or
more users.  Each task consists of 50 comparisons and the interface is
optimized to be performed with 50 mouse clicks (and no scrolling).
The mean completion time was approximately 2 minutes.  This payment
was determined based upon worker feedback.  Initial experiments
revealed a high percentage of seemingly random responses, but after
closer inspection the vast majority of these poor results came from a
small number of individuals.  To improve quality control, we imposed a
limit on the maximum number of tasks a single user could perform on
any one day, we selected users who had completed at least 48 tasks
with a 95\% approval rate, and each task included 20\% triples for
which there was tremendous agreement between users.  These ``gold
standard'' triples were also automatically generated and proved to be
an effective manner to recognize and significantly reduce cheating.
The system is implemented using Python, Matlab, and C, and runs
completely automatically in Windows or Unix.

\subsection{Question phrasing and crowd alignment}
One interesting issue is how to frame similarity questions.  On the
one hand, it seems purest in form to give the users carte blanche and
ask only, ``is $a$ more similar to $b$ than $c$.''  On the other hand,
in feedback users complained about these tasks and often asked what we
meant by similarity.  Moreover, different users will inevitably weigh
different features differently when performing comparisons.

For example, consider a comparisons of face images, where $a$ is a
white male, $b$ is a black male, and $c$ is a white female.  Some
users will consider gender more important in determining skin color,
and others may feel the opposite is true.  Others may feel that the
question is impossible to answer.  Consider phrasing the question as
follows, ``At a {\em distance}, who would you be more likely to
mistake for $a$: $b$ or $c$?''  For any two people, there is
presumably some distance at which one might be mistaken for the other,
so the question may seem more possible to answer for some people.
Second, users may more often agree that skin color is more important
than gender, because both are easily identified close up by skin color
may be identifiable even at a great distance.  While we haven't done
experiments to determine the importance of question phrasing,
anecdotal evidence suggests that users enjoy the tasks more when more
specific definitions of similarity are given.

Two natural goals of question phrasing might be: (1) to align users in
their ranking of the importance of different features and (2) to align
user similarity notions with the goals of the task at hand.  For
example, if the task is to find a certain person, the question,
``which two people are most likely to be (genealogically) related to
one another,'' may be poor because users may overlook features such as
gender and age.  In our experiments on neckties, for example, the task
was titled ``Which ties are most similar?'' and the complete
instructions were: {\em ``Someone went shopping at a tie store and
  wanted to buy the item on top, but it was not available. Click on
  item (a) or (b) below that would be the {\bf best substitute}.''}

\section{Experiments and Applications}


\begin{figure}
\center{\includegraphics[width=3.2in]{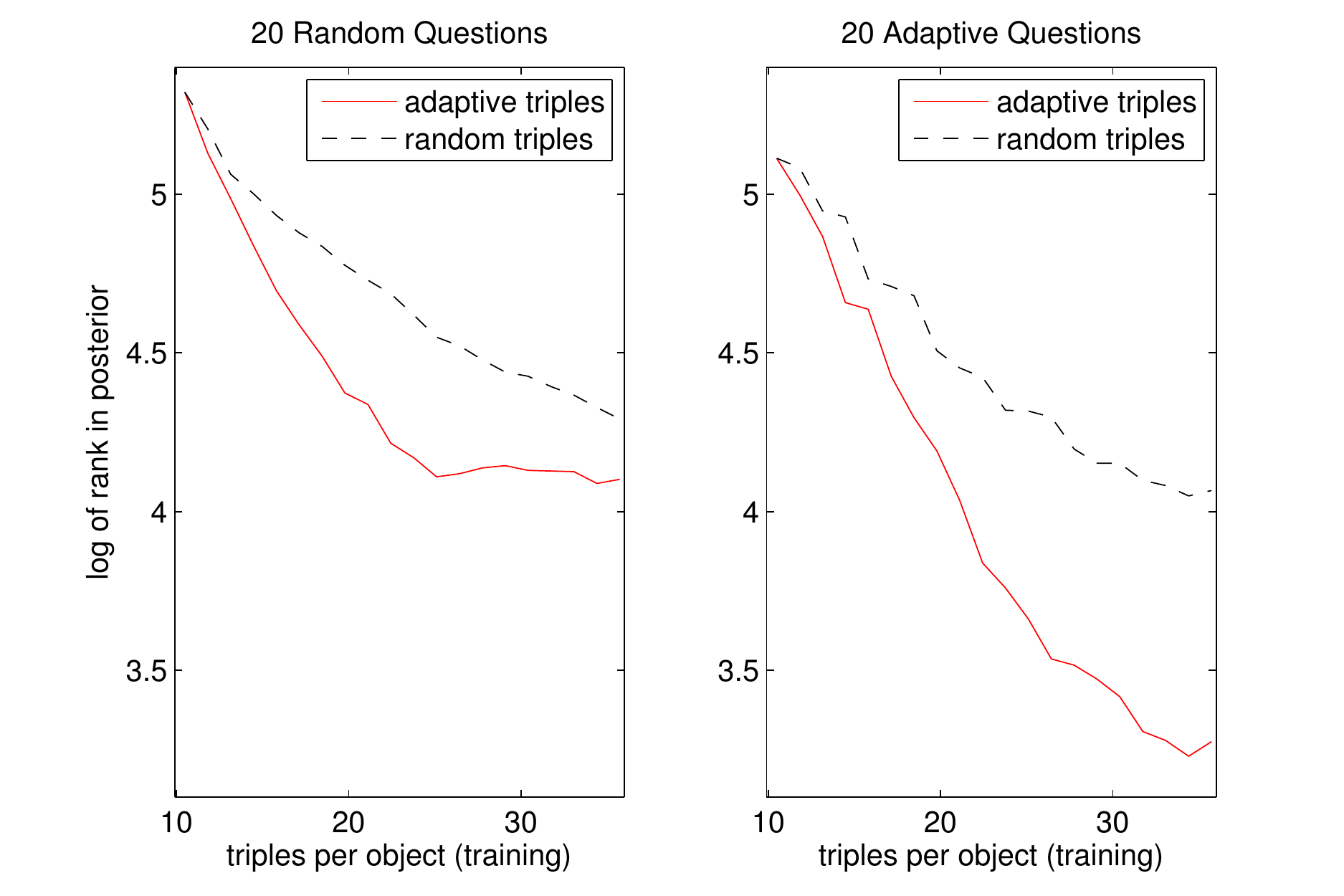}} \caption{\label{fig:YAY} The 20Q plots comparing training based on adaptively selected triples to randomly selected training triples.  The left plot shows the mean predicted log-ranks of randomly chosen objects after 20 randomly chosen questions. The right plot shows the mean predicted log-ranks of randomly chosen objects after 20 adaptive queries.
Plots were generated using the mixed dataset consisting of $n=225$ objects, with 10 initial random triples per object.  In both plots, the performance using $22 = (10 $ random$) + (12$ adaptively chosen$)$ triples was matched using all 35 random triples.  Hence, approximately 60\% more random triples were required to match this particular performance level of the adaptive algorithm.
}
\end{figure}

We experiment on four datasets: (1) twenty-six images of the lowercase
roman alphabet (Calibri font) (2) 223 country flag images from
flagpedia.net, (3) 433 floor tile images from Amazon.com, and (4) 300
product images from an online tie store also hosted at Amazon.com. We
also consider a hand-selected ``mixed'' dataset consisting of 225
images: 75 ties, 75 tiles, and 75 flags.  Surprisingly, it seems that
for these datasets about 30-40 triples per object suffice to learn the
Crowd Kernel well, according to the 20Q metric that we describe
below.. Figure \ref{fig:YAY} shows the results on the mixed dataset,
comparing the 20Q metric trained on random vs.\ adaptive triples.  For
both adaptive and random questions, for certain performance levels,
one requires about 60\% more random queries than adaptive queries.
Given very little data or a lot of data, one does not expect the
adaptive algorithm to perform significantly better.

Figure \ref{fig:adaptive-trips} shows the adaptive triples selected on
an illustrative dataset composed of a mixture of flags, ties and
tiles.

For ease of implementation, we assume all users are identical.  This
is a natural starting point, especially given that our main focus is
on active learning.

\begin{figure}
\center{\includegraphics[width=3.4in]{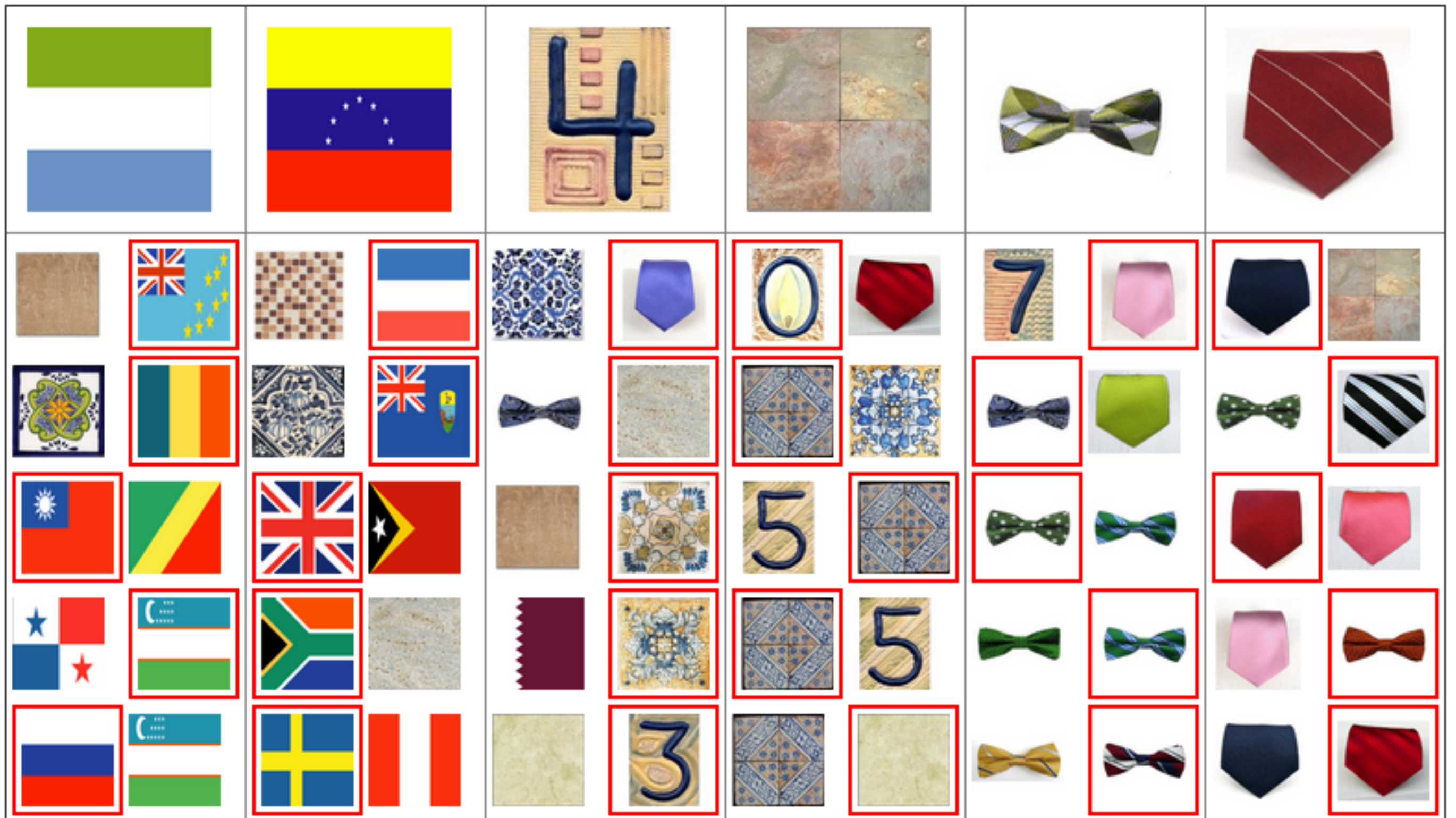}} \caption{\label{fig:adaptive-trips} Six objects in the mixed dataset along with the adaptive pairs to which that object was compared, below, and user selections in red.  The first pair below each large object was chosen adaptively based upon the results of ten random comparisons.  Then, proceeding down, the pairs were chosen using the ten random comparisons plus the results of the earlier comparisons above.}
\end{figure}
\begin{figure}
{\center \includegraphics[width=3.4in]{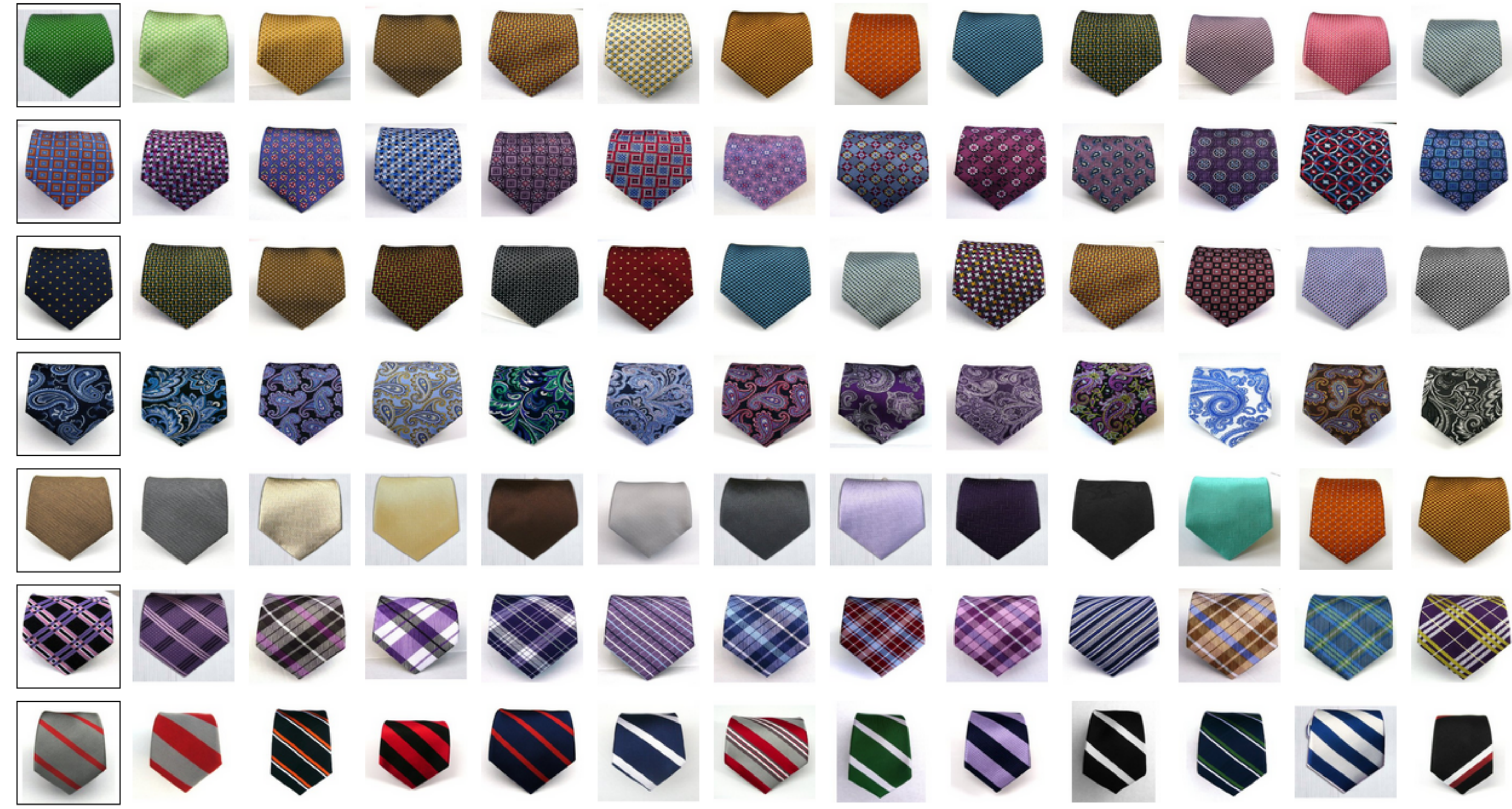}}
\caption{\label{fig:nn} Nearest-neighbors for some neckties from the tie-store dataset.  Nearest neighbors are displayed from left to right.  Note that neck ties were never confused for bow ties, tie clips or scarves.
}
\end{figure}
\begin{figure}
{\center \includegraphics[width=3.4in]{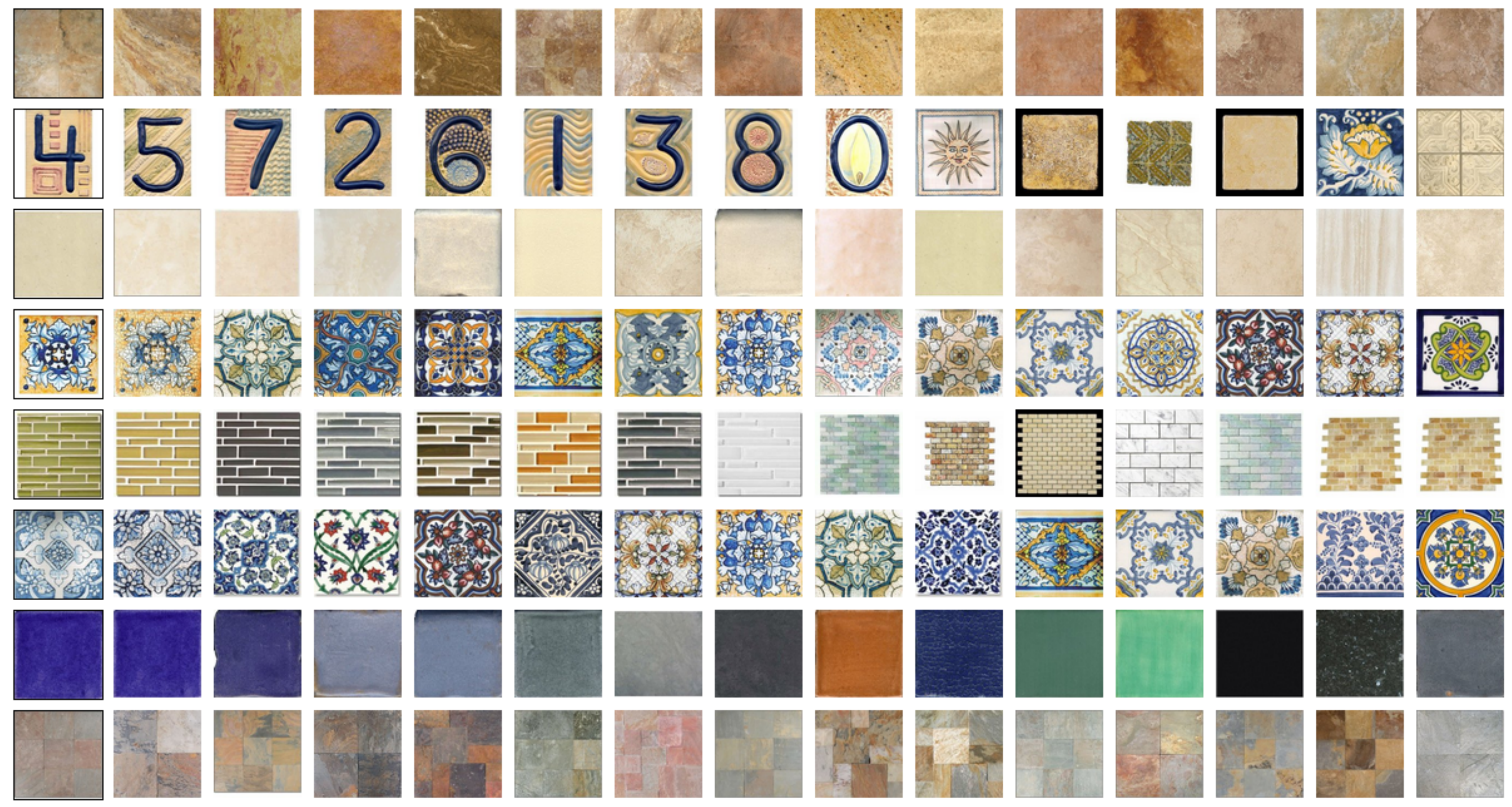}} \caption{\label{fig:nn-tiles} Examples of nearest neighbors for floor tiles.}
\end{figure}
\begin{figure}
{\center \includegraphics[width=3.5in]{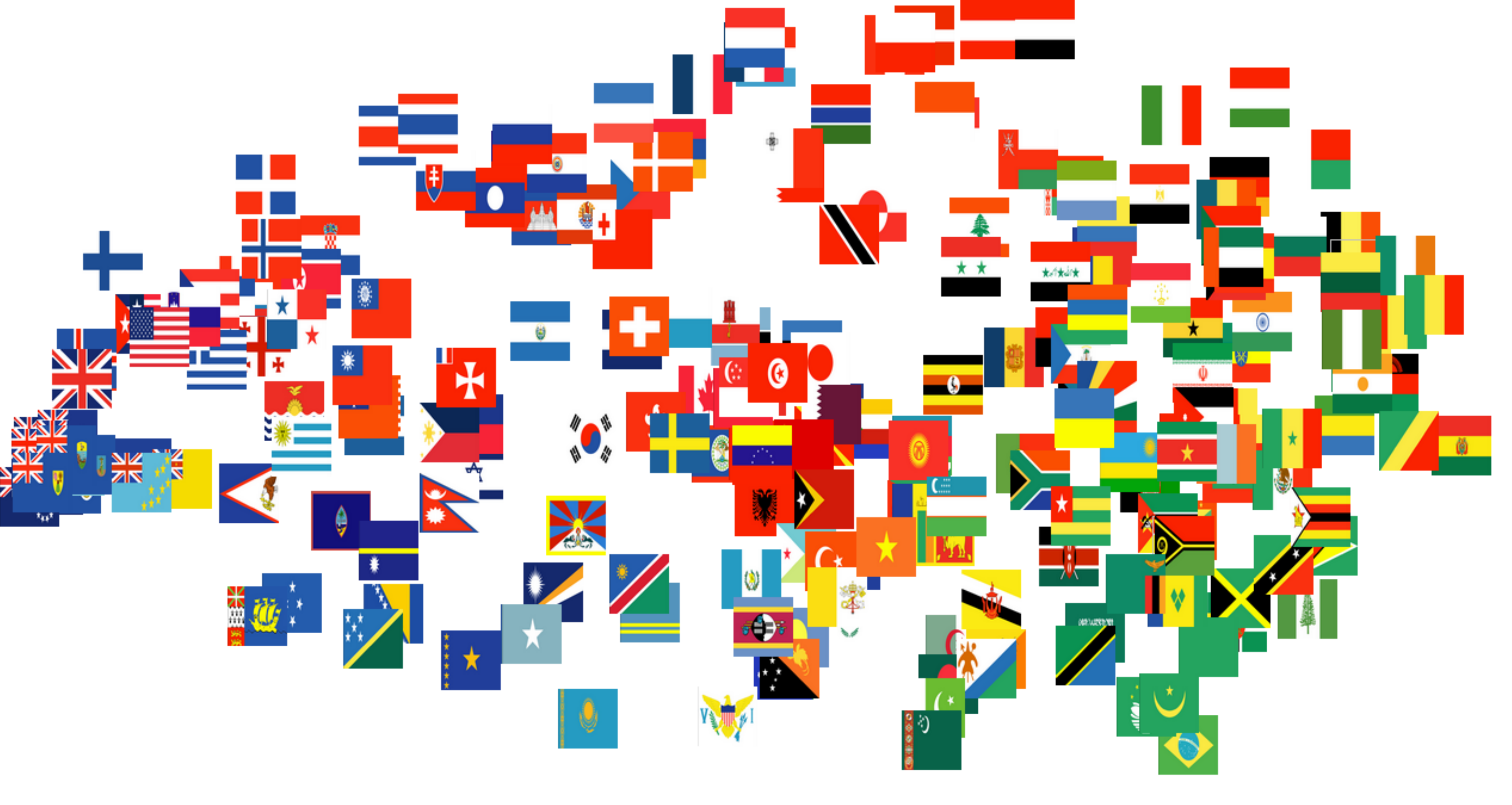}} \caption{\label{fig:flagspca} The flag images displayed according to their projection on the top two principal components of a PCA.  The principal component
is the horizontal axis.}
\end{figure}
\begin{figure}
{\center \includegraphics[width=3.5in]{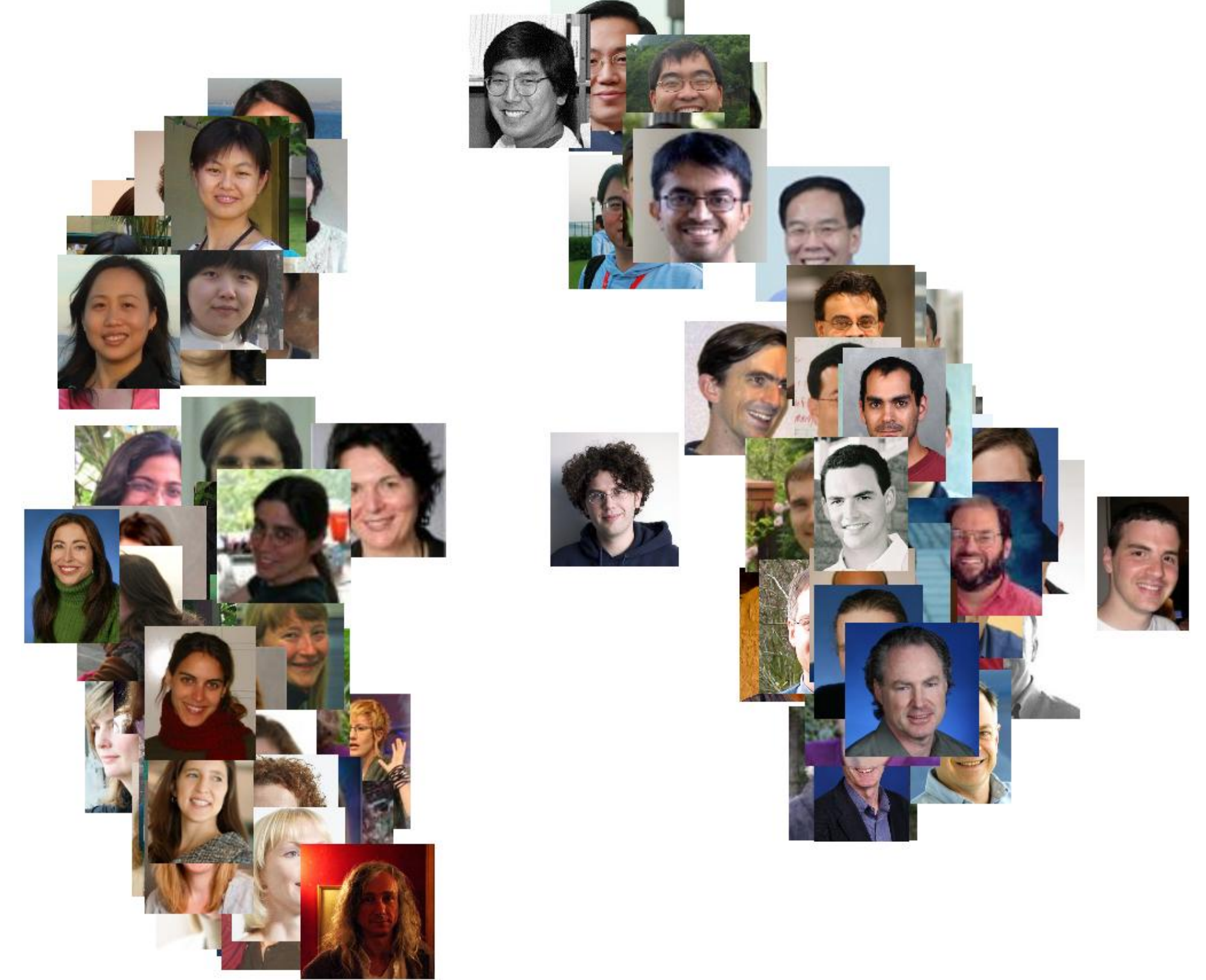}} \caption{\label{fig:facespca} For fun: the faces of 186 colleagues displayed according to their projection on the top two principal components. The principal component
is the horizontal axis.}
\end{figure}

\begin{figure}
{\center
\begin{tabular}{|l|l|l|}
 \hline
 Dataset & Feature & LOO error rate \\
 \hline
 \hline
Tiles & Ornate & 4.1\% \\
Ties & Bow tie vs.~neck tie & 0.0\%\\
Ties & Multicolor vs. plain & 0.5\%\\
Flags & Striped & 0.0\%\\
Letters & Vowel & 4.0\%\\
Letters & Short/tall & 5.3\%\\
\hline
\end{tabular}
\caption{Empirical results of an SVM using the crowd kernel, based on leave-one-out cross validation. Note that in many cases we hand-selected ``easy'' subsets of objects to label.  For example, when judging whether a flag is striped or not, we removed flags which might be interpreted either way.  The selection was based on how unambiguous the objects were, with respect to the desired label, and not related to the target kernel.
\label{fig:results}}
}\end{figure}

\subsection{20 Questions Metric}
Since one application of such systems is search, i.e., searching for an item that a user knows what it looks like (we assume that the user can answer queries as if she even knows what the store image looks like), it is natural to ask how well we have ``honed in'' on the desired object after a certain number of questions.  For the 20 Questions (20Q) metric, two independent parts of the system are employed, an {\em evaluator} and a {\em guesser}.  First, the evaluator randomly and secretly selects an object in the database, $x$.  The guesser is aware of the database but not of which item $x$ has been selected.  The guesser is allowed to query 20 triples (as in the game ``20 Questions'') with head $x$, after which it produces a ranking of items in the database, from most to least likely.  Then the evaluator reveals the identity of $x$ and hence its position in the ordered ranking, as well.  The metric is the average $\log$ of the position of the random target item in this list.  The $\log$  reflects the idea that the position of lower-ranked objects is less import -- it weights moving an object from position 2 to 4 as important as moving an object from position 20 to 40.  This metric is meant to roughly capture performance, but of course in a real system users may not have the patience to click on twenty pairs of images and may prefer to have fewer clicks but use larger comparison sets. (Our GUI has the user select one of 8 or 9 images, which could potentially convey the same information as 3 binary choices.)  Now, the questions that the guesser asks could be random questions, which we refer to as the 20 Random Questions metric, or adaptively chosen, for the 20 Adaptive Questions metric.  In the latter case, the guesser uses the same maximum information-gain criterion as in the adaptive triple generation algorithm, relative to whichever model was learned (based on random or adaptively selected training triples).

\subsection{Using the Kernel for Classification}

The learned Kernels may be used in a linear classifier such as a
support vector machine.  This helps elucidate which features have been used
by humans in labeling the data.  In the experiments below, an unambiguous subset of the images were
labeled with binary $\pm$ classes. For example, we omitted the letter {\em y} in labeling vowels and consonants ({\em y} was in fact classified as a consonant, and {\em c} was misclassified as a vowel), and we selected only completely striped or unstriped flags for flag stripe classification.  The SVM-Light \cite{Joachims98} package was used with default parameters and its leave-one-out (LOO) classification results are reported in Figure \ref{fig:results}.

\subsection{Visual Search}
We provide a GUI visual search tool, exemplified in Figure
\ref{fig:tilestree}. Given $n$ images, their embedding into $\reals^d$
and the related probabilistic model for triples, we would like to help
a user find either a particular object she has in mind, or a similar
one. We do this by playing ``20 Questions'' with 8- or 9-tuple queries, generated by an information-gain
adaptive selection algorithm very similar to the one described in Section \ref{sec:ada}.

\smallskip
\noindent
{\bf Acknowledgments.}  We thank Sham Kakade and Varun Kanade for helpful discussions.  Serge Belongie's research is partly funded by ONR MURI Grant N00014-08-1-0638 and NSF Grant AGS-0941760.

\appendix
\section{Proof of Theorem~\ref{thm:one}}
\label{app:proof}
\subsection{Analysis}

Before we present the proof of Theorem~\ref{thm:one}, we introduce a natural generalization which will be a convenient abstraction.  We call this relative regression.

\subsection{Relative regression}

Consider the following online relative regression model.  There is a sequence of examples $(x_1,x_1',y_1),(x_2,x_2',y_2),\ldots,(x_T,x_T',y_T) \in X \times X \times \{0,1\}$, for some set $X \subseteq \reals^d$.  For $w \in \reals^d$, define the relative linear model with $w$ to be,
$$p_t(w) = \frac{w \cdot x_t}{w \cdot x_t + w \cdot x_t'}.$$
The sequence $x_1,x_1',\ldots,x_T,x_T'$ is chosen arbitrary (or even adversarially) in advance; afterwards it is assumed that there is some $w^*\in \reals^d$ such that, $\Pr[y_t=1]=p_t(w^*)$ and that the different $y_t$'s are independent.  It is further assumed that $w^*$ belongs to some convex compact set $W \subset \reals^d$ and that $w \cdot x$ is positive and bounded over $w \in W,x\in X$.  Without loss of generality, by scaling we can require $w\cdot x \in [1,\beta]$ for some $\beta>0$ and every $w \in W,x\in X$.

On the $t$th period, the algorithm outputs $w_t \in W$, then observes $x_t,x_t',y_t$, and finally incurs loss $\ell_t(w_t)$ where $\ell_t(w)=\log 1/p_t(w)$ if $y_t=1$ and $\ell_t(w)=\log 1/(1-p_t(w))$ if $y_t=0$.  The goal of the algorithm is to incur total loss not much larger than $\sum_t \ell_t(w^*)$, the best choice had we known $w^*$ in advance.  

We note that an analogous (and slightly simpler version of) the following lemma can be proven for squared loss.
\begin{lemma}\label{lem:rel}
Let $X,W \subseteq \reals^d$ and suppose that $W$ is compact and convex and $\exists \alpha >0$ such that for all $x \in X$, $w \in W$:
$\|x\|,\|w\|\leq 1$, and $w \cdot x \geq \alpha$.
The for any $\eta>0$ and any $w^0\in W$ and $w^{t+1}=\Pi_W(w^t-\eta \nabla \ell_t(w_t))$,
$$\frac{1}{T}\E\left[\sum_{t=1}^T \ell_t(w_t) -\ell_t(w^*)\right] \leq \frac{\eta}{\alpha^2}+\frac{2}{T\eta \alpha}.$$
\end{lemma}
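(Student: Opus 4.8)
The plan is to run the standard online gradient descent potential argument, but to confront the non-convexity of $\ell_t$ by passing to expectations. First I would record the purely geometric consequence of the update $w^{t+1}=\Pi_W(w^t-\eta\nabla\ell_t(w_t))$: since $w^*\in W$ and $\Pi_W$ is nonexpansive, $\|w^{t+1}-w^*\|^2\le \|w^t-w^*\|^2-2\eta\,\nabla\ell_t(w_t)\cdot(w_t-w^*)+\eta^2\|\nabla\ell_t(w_t)\|^2$. Rearranging, summing over $t$ and telescoping gives $\sum_t\nabla\ell_t(w_t)\cdot(w_t-w^*)\le \frac{\|w^0-w^*\|^2}{2\eta}+\frac{\eta}{2}\sum_t\|\nabla\ell_t(w_t)\|^2$, a step that uses no convexity. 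Here $\|w^0-w^*\|^2\le 4$, and writing (for $y_t=1$) $\nabla\ell_t(w)=\frac{x_t'u-x_tv}{u(u+v)}$ with $u=w_t\cdot x_t,\ v=w_t\cdot x_t'$, one gets $\|\nabla\ell_t(w_t)\|\le 1/u\le 1/\alpha$ from $\|x_t\|,\|x_t'\|\le 1$ and $w\cdot x\ge\alpha$.

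The difficulty, and the main obstacle, is that $\ell_t$ is \emph{not} convex (indeed even $f_t:=\E[\ell_t]$, the expectation over $y_t$, need not be), so I cannot pass from this cumulative gradient inner product to the regret $\sum_t(\ell_t(w_t)-\ell_t(w^*))$ through the usual first-order inequality. I would resolve this in expectation. Conditioning on the history through round $t-1$ (which fixes $w_t$ while $w^*$ is constant), the realized loss satisfies $\E[\ell_t(w_t)-\ell_t(w^*)]=D\big(p_t(w^*)\,\|\,p_t(w_t)\big)$, the binary relative entropy, because $w^*$ globally minimizes $f_t$. Likewise $\E[\nabla\ell_t(w_t)\cdot(w_t-w^*)]=\nabla f_t(w_t)\cdot(w_t-w^*)$, and a direct computation with $u^*=w^*\cdot x_t,\ v^*=w^*\cdot x_t'$ collapses this to the clean nonnegative form $\nabla f_t(w_t)\cdot(w_t-w^*)=\big(p_t(w^*)-p_t(w_t)\big)\big(\frac{u^*}{u}-\frac{v^*}{v}\big)$, where the two factors share the sign of $u^*v-uv^*$.

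The crux is then a pointwise comparison between these two conditional expectations that plays the role convexity would normally play. Using the $\chi^2$ upper bound on relative entropy, $D(p^*\|p)\le \frac{(p^*-p)^2}{p(1-p)}$, together with $p_t(w_t)(1-p_t(w_t))=\frac{uv}{(u+v)^2}$ and the identity above, everything reduces to $D\big(p_t(w^*)\,\|\,p_t(w_t)\big)\le \frac{u+v}{u^*+v^*}\,\nabla f_t(w_t)\cdot(w_t-w^*)\le \frac{1}{\alpha}\,\nabla f_t(w_t)\cdot(w_t-w^*)$, the last step using $u+v\le 2$ and $u^*+v^*\ge 2\alpha$. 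Summing over $t$ and taking full expectation turns the regret into $\frac{1}{\alpha}$ times the cumulative expected gradient inner product, to which I apply the telescoping bound above; with $\|w^0-w^*\|^2\le 4$ and $\|\nabla\ell_t(w_t)\|^2\le 1/\alpha^2$ this yields $\frac{1}{T}\E\big[\sum_t \ell_t(w_t)-\ell_t(w^*)\big]\le \frac{2}{T\eta\alpha}+\frac{\eta}{2\alpha^{3}}$, i.e.\ a bound of exactly the stated shape (the first term matching, the step-size term differing only by a benign power of $\alpha\le 1$, which is all Theorem~\ref{thm:one} requires). The only genuinely new ingredient beyond textbook online gradient descent is the $\chi^2$/relative-entropy inequality substituting for convexity; I expect verifying that inequality and the gradient identity to be the sole real work, the rest being the standard potential telescoping.
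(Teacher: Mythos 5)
Your proposal is correct and takes essentially the same route as the paper's proof: a Zinkevich-style projected-gradient potential argument, an exact computation of the conditional expected gradient inner product exploiting the model's structure (your identity $\nabla f_t(w_t)\cdot(w_t-w^*)=\bigl(p_t(w^*)-p_t(w_t)\bigr)\bigl(\frac{u^*}{u}-\frac{v^*}{v}\bigr)$ is exactly the paper's $Z_t\bigl(p_t(w^*)-p_t(w_t)\bigr)^2$ in factored form, with the homogeneity fact $\nabla\ell_t(w_t)\cdot w_t=0$ built in), and the $\mathrm{KL}\le\chi^2$ bound, which is the paper's Lemma~\ref{lem:appx1}. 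The small discrepancy in the step-size term ($\eta/(2\alpha^3)$ versus the stated $\eta/\alpha^2$) is not a flaw on your side: the paper's final display uses $\eta^2 G$ where the potential argument actually produces $\eta^2 G^2$ (with $G=2/\alpha$), so your version --- which also sharpens the gradient bound to $\|\nabla\ell_t(w_t)\|\le 1/\alpha$ --- is the careful one, and it still yields the $O(1/\sqrt{T})$ rate that Theorem~\ref{thm:one} requires.
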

In particular, for $\eta=\sqrt{2\alpha/T}$, this gives a bound on the right-hand side of $\sqrt{\frac{8}{T\alpha^3}}$.
\begin{proof}
Following the analysis of Zinkevich \cite{Zinkevich03} we consider the potential equal to the squared distance $(w_t-w^*)^2$ and argue that it decreases whenever we have substantial error.
Let $\nabla_t = \nabla \ell_t(w_t)\in \reals^d$, which is,
$$\nabla_t = \frac{x_t+x_t'}{w_t\cdot x_t+w_t\cdot x_t'} - y_t \frac{x_t}{w_t\cdot x_t} - (1-y_t)\frac{x_t'}{w_t\cdot x_t'}.$$
By the triangle inequality $\|\nabla_t\| \leq G$ for $G=\frac{2}{\alpha}$.  Now, as Zinkevich points out, due to convexity of $W$, $(w-\Pi_W(v))^2\leq (w-v)^2$ for any $v \in \reals^d$ and $w \in W$.  Hence,
$$(w^*-w_{t+1})^2 \leq (w^*-w_t+\eta \nabla_t)^2.$$
Thus the {\em decrease} in potential, call it $\Delta_t = (w^*-w_t)^2-(w^*-w_{t+1})^2$, is at least:
\begin{align*}
\Delta_t &\geq (w^*-w_t)^2-(w^*-w_t+\eta \nabla_t)^2 \\
&=2\eta \nabla_t \cdot (w_t-w^*)-\eta^2 \nabla_t^2.
\end{align*}

Next, we consider the quantity,
$\E[ \Delta_t \cdot w^* ]$, where the expectation is taken over the random $y_t$ (fixing $y_1,y_2,\ldots,y_{t-1}$).  By expansion, the expectations is:
$$\frac{w^* \cdot x_t + w^* \cdot x_t'}{w \cdot x_t + w\cdot x_t'} - p_t(w^*) \frac{w^* \cdot x_t}{w_t\cdot x_t} - (1-p_t(w^*))\frac{w^* \cdot x_t'}{w_t\cdot x_t'}.$$
After simple algebraic manipulation, which is difficult to show in two-column format, we have,
\begin{align*}
\E[\Delta_t \cdot w^*]&=-Z_t(p_t(w^*)-p_t(w_t))^2 \text{ where }\\
Z_t &=\frac{(w_t\cdot x_t+w_t\cdot x_t')(w^*\cdot x_t+w^*\cdot x_t')}{(w_t \cdot x_t)(w_t \cdot x_t')}.
\end{align*}

Also note that $\Delta_t \cdot w_t =0$ regardless of $y_t$.  Hence,
$\E[ \Delta_t \cdot w_t] =0$.  Combining these with the fact that we have shown that $\Delta_t \geq 2\eta \nabla_t \cdot(w_t-w^*)-\eta^2 G^2$, gives,

\begin{align*}
\E[\Delta_t] &\geq 2\eta Z_t (p_t(w^*)-p_t(w_t))^2-\eta^2G^2\\
&\geq 2 \eta \frac{w^*\cdot x_t + w^* \cdot x_t'}{w_t\cdot x_t + w_t \cdot x_t'}\frac{(p_t(w_t)-p_t(w^*))^2}{p_t(w_t)(1-p_t(w_t))}-\eta^2G\\
&\geq 2\eta\alpha \frac{(p_t(w_t)-p_t(w^*))^2}{p_t(w_t)(1-p_t(w_t))}-\eta^2G.
\end{align*}
In the last line we have used the fact that $w\cdot x \in [\alpha,1]$ for $w\in W,x\in X$.
Now, by Lemma \ref{lem:appx1} which follows this proof,
$$\ell_t(w_t)-\ell_t(w^*) \leq \frac{(p_t(w_t)-p_t(w^*))^2}{p_t(w_t)(1-p_t(w_t))}.$$
Combining the previous two displayed equations gives,
$$\E[\Delta_t]  \geq 2 \eta\alpha (\ell_t(w_t)-\ell_t(w^*))-\eta^2G.$$
Finally, since the potential $(w_t -w^*)^2>0$, we have $\sum \Delta_t \leq (w_0-w^*)^2 \leq 4.$  Hence,
$$\sum_t \ell_t(w_t)-\ell_t(w^*) \leq \frac{T \eta^2 G+4}{2\eta \alpha}.$$
Substituting $G=2/\alpha$ gives the lemma.
\end{proof}

\begin{lemma}\label{lem:appx1}
Let $p+q=1$ and $p^*+q^*=1$ for $p,p^* \in [0,1]$.  Then,
$$p^* \log \frac{p^*}{p} + q^* \log \frac{q^*}{q} \leq \frac{(p-p^*)^2}{pq}.$$
\end{lemma}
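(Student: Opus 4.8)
The plan is to recognize the two sides of the claimed inequality as familiar information-theoretic quantities and then invoke the elementary bound $\log x \le x-1$. Viewing $(p^*,q^*)$ and $(p,q)$ as two Bernoulli distributions, the left-hand side $p^*\log\frac{p^*}{p}+q^*\log\frac{q^*}{q}$ is exactly the Kullback--Leibler divergence from $(p,q)$ to $(p^*,q^*)$, while the right-hand side $\frac{(p-p^*)^2}{pq}$ is exactly the corresponding $\chi^2$-divergence. So the lemma is the binary case of the standard fact that KL divergence is dominated by $\chi^2$-divergence, and I would prove it directly rather than citing it. Throughout I take $\log$ to be the natural logarithm (the inequality holds verbatim for any base $\ge e$), and I assume $p\in(0,1)$ so both sides are finite, with the boundary cases handled by the usual convention $0\log 0 = 0$ and by continuity.

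The core step is to apply the inequality $\log x \le x-1$ termwise. Setting $x=p^*/p$ and multiplying by $p^*\ge 0$ gives $p^*\log\frac{p^*}{p}\le \frac{(p^*)^2}{p}-p^*$, and likewise $q^*\log\frac{q^*}{q}\le \frac{(q^*)^2}{q}-q^*$. Summing and using $p^*+q^*=1$ yields
$$p^*\log\frac{p^*}{p}+q^*\log\frac{q^*}{q}\ \le\ \frac{(p^*)^2}{p}+\frac{(q^*)^2}{q}-1.$$
This is the only genuinely analytic step; everything after it is algebra.

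It then remains to verify the identity $\frac{(p^*)^2}{p}+\frac{(q^*)^2}{q}-1=\frac{(p-p^*)^2}{pq}$. Placing the left side over the common denominator $pq$ and substituting $q=1-p$, $q^*=1-p^*$, the numerator $q(p^*)^2+p(q^*)^2-pq$ simplifies: using $q^*=1-p^*$ one gets $q(p^*)^2+p(q^*)^2=(p^*)^2+p-2pp^*$, and subtracting $pq=p-p^2$ leaves exactly $(p^*)^2-2pp^*+p^2=(p-p^*)^2$. Combining this with the displayed bound gives the claim. I do not expect any real obstacle here; the only points requiring a moment of care are fixing the logarithm base (natural log, or more generally base $\ge e$, is what makes the constant-free bound $\frac{(p-p^*)^2}{pq}$ correct) and confirming the $\chi^2$ algebraic identity, since the cancellation that produces the clean perfect square $(p-p^*)^2$ is the crux of why the bound matches the form needed in the proof of Lemma~\ref{lem:rel}.
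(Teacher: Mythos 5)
Your proof is correct, and it arrives at the bound through the same crucial algebraic identity as the paper, namely $\frac{(p^*)^2}{p}+\frac{(q^*)^2}{q}=1+\frac{(p-p^*)^2}{pq}$, but via a slightly different analytic step. The paper first applies Jensen's inequality to the concave $\log$ with weights $p^*,q^*$, obtaining the intermediate bound $p^*\log\frac{p^*}{p}+q^*\log\frac{q^*}{q}\le \log\left(\frac{(p^*)^2}{p}+\frac{(q^*)^2}{q}\right)$, and only then invokes the identity together with $\log(1+x)\le x$. You instead linearize termwise, using $\log x\le x-1$ with $x=p^*/p$ and $x=q^*/q$, multiplying by $p^*$ and $q^*$ and summing with $p^*+q^*=1$, which lands directly on $\frac{(p^*)^2}{p}+\frac{(q^*)^2}{q}-1$ with no appeal to Jensen. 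The two routes deploy the same tangent-line bound on $\log$ at different stages, so the difference is modest: your version is marginally more elementary and recognizes the statement explicitly as the binary instance of $\mathrm{KL}\le\chi^2$, whereas the paper's ordering passes through the strictly sharper intermediate inequality $\mathrm{KL}\le\log(1+\chi^2)$ before relaxing it. Your explicit handling of the logarithm base and of the boundary cases $p\in\{0,1\}$ (via the convention $0\log 0=0$) addresses points the paper leaves implicit, and your verification of the $\chi^2$ identity matches the paper's ``simple algebraic manipulation'' step exactly.
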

\begin{proof}
By concavity of $\log$, Jensen's inequality implies,
$$p^* \log \frac{p^*}{p} + q^* \log \frac{q^*}{q}  \leq \log \frac{(p^*)^2}{p} + \frac{(q^*)^2}{q}.$$
Simple algebraic manipulation shows that,
$$\frac{(p^*)^2}{p} + \frac{(q^*)^2}{q} = 1 + \frac{(p-p^*)^2}{pq}.$$
Finally, the fact that $\log 1+x \leq x$ completes the lemma.
\end{proof}

\subsection{Proof}

To prove theorem~\ref{thm:one}, map matrix $S\in \reals^{n \times n}$ to a vector $w(S)\in\reals^{1+n^2}$ consisting of the constant $\mu+2$ in the first coordinate followed by the $n^2$ entries of $S$.  Taken over the set of symmetric $S \succeq 0$ such that $S_{ii}=1$, the vectors $w(S)$ for a compact convex set of radius $\sqrt{n^2+(2+\mu)^2}$.  Also,
$$p^{a_t}{b_tc_t}=\frac{\mu + 2-\S_{ac}-\S_{ca}}{2\mu + 4-\S_{ab}-\S_{ba}-\S_{ac}-\S_{ca}},$$
is our relative regression model for $w=w(S)$, $x\in\reals^{1+n^2}$ being the vector with a $1$ is the first position and -1's in the positions corresponding to the $ac$ and $ca$ entries of $S$ (and zero elsewhere), and $x'$ having a 1 in the first position and -1's in the positions corresponding to $ab$ and $ba$ (and zero elsewhere).  The inner product of $w(S)$ and $x$ is $\mu+\delta_{ab}$ and hence is bounded.  
To apply Lemma \ref{lem:rel}, one must scale $w(S)$ and $x,x'$ down.  However, it is clear that for any $\epsilon$ and $T$ sufficiently large, setting $\eta=1/\sqrt{T}$ in Lemma \ref{lem:rel} gives the necessary bound.

\bibliography{sim}
\bibliographystyle{icml2011}

\end{document}